\newtheorem{theorem}{Theorem}
\newtheorem{corollary}{Corollary}
\newtheorem{assumption}{Assumption}
\newtheorem{example}{Example}
\newtheorem{remark}{Remark}
\newcommand{\comment}[1]{}
\DeclareMathOperator*{\argmax}{arg\,max}
\newcommand{\cem}[1]{{\color{blue}#1}}
\newcommand{\cm}[1]{{\color{green}#1}}
\newcommand{\cem}[1]{#1}
\newcommand{\cm}[1]{#1}
\newcommand{\rev}[1]{{\color{blue}#1}} 
\newcommand{\com}[1]{\textbf{\color{red}(COMMENT: #1)}} 
\newcommand{\clar}[1]{\textbf{\color{green}(NEED CLARIFICATION: #1)}}
\newcommand{\rev}[1]{#1}
\newcommand{\com}[1]{}
\newcommand{\clar}[1]{}
\begin{document}
\title{Online Learning in a Contract Selection Problem}
\author{Cem Tekin, Mingyan Liu
\thanks{C. Tekin is with the Dept. of Electrical Engineering, University of California, Los Angeles, CA, 90095, 
M. Liu is with the Dept. of Electrical Engineering and Computer Science, University of Michigan, Ann Arbor, MI 48105, cmtkn@ucla.edu, mingyan@eecs.umich.edu.}
\thanks{This work is supported by NSF grant CIF-0910765 and ARO grant
W911NF-11-1-0532.}}

\maketitle

\begin{abstract}
In an online contract selection problem there is a seller which offers a set of contracts to sequentially arriving buyers whose types are drawn from an unknown distribution. If there exists a profitable contract for the buyer in the offered set, i.e., a contract with payoff higher than the payoff of not accepting any contracts, the buyer chooses the contract that maximizes its payoff.
In this paper we consider the online contract selection problem to maximize the sellers profit.  Assuming that a structural property called {\em ordered preferences} holds for the buyer's payoff function, 
we propose online learning algorithms that have sub-linear regret with respect to the best set of contracts given the distribution over the buyer's type. \cem{This problem has many applications including spectrum contracts, wireless service provider data plans and recommendation systems.}
\end{abstract}
\section{Introduction} \label{contract:sec:intro}

In an online contract selection problem there is a seller who offers a bundle of contracts to buyers arriving sequentially over time. The goal of the seller is to maximize its total expected profit up to the final time $T$, by learning the best bundle of contracts to offer. However, the seller does not know the best bundle of contracts beforehand because initially it does not know the preferences of the buyers. 

Assuming that the buyers' preferences change stochastically over time, our goal in this paper is to design learning algorithms for the seller to maximize its expected profit. 
Specifically, we assume that the preferences of a buyer depends on its {\em type}, and is given by a payoff function depending on the type of the buyer. 
The type of the buyer at time step $t$ is drawn from a distribution not known by the seller, independently from the other time steps. Obviously, the best bundle of contracts (which maximizes the sellers expected profit) depends on the distribution of the buyers' type and the preferences of the buyers.

We assume that the seller can choose what to offer from a continuum of contracts, but it should choose a finite number of contracts to offer simultaneously. 
We show that if the buyers' payoff function has a special property which we call the ordered preferences property, then there exists learning algorithms for the seller by which the seller can estimate the type distribution of the buyers by offering contracts and observing the contracts that are accepted by the buyers. Then, the seller can compute the expected payoff of different bundles of contracts using the estimated type distribution.

The online contract selection problem can be viewed as a combinatorial multi-armed bandit problem, where each arm is a vector (bundle) of contracts, and each component of the vector can be chosen from an interval of the real line. Two aspects that make this problem harder than the classical multi-armed bandit problem are: (i) uncountable number of contracts; (ii) exponential number of arms in the number of contracts. We can overcome (i) by offering bundles of sufficiently {\em closely spaced} contracts to form an estimate of the distribution of buyer's type, and (ii) by writing the expected payoff of an arm as a function of the expected payoffs of the contracts in that arm. Exploiting the structure of the problem, we prove sublinear regret bounds which scale linearly in the dimension $m$.

The online learning problem we consider in this \rev{paper} involves large strategy sets, combinatorial and contextual elements. Problems with a continuum of arms are considered in \cite{agrawal1995continuum, kleinberg2004nearly, cope2009regret, auer2007improved}, where sub-linear regret results are derived. Several combinatorial bandit problems are studied in \cite{gai2012combinatorial, gai2012restless}, and problems involving stochastic linear optimization are considered in \cite{bartlett2008high, dani2008stochastic}. Another line of work \cite{kleinberg2008multi} generalized the continuum armed bandits to bandits on metric spaces. In this setting, the difference between the expected payoffs of a pair of arms is related to the distance between the arms via a {\em similarity} metric. Contextual bandits, in which context information is provided to the algorithm at each round is studied in \cite{langford2007epoch, slivkins2009contextual}. The goal there is to learn the best arm, given the context.

\cem{Many applications can be modeled using our online contract selection framework. For instance, the bundles of contracts can be viewed as wireless service provider data plans from which the users make selections based on their needs, or they can be viewed as recommendations in a recommender system such as airline tickets, hotels, rental cars, etc., where each recommendation has a rating that encodes its price and quality.}

The organization of the rest of the paper is as follows. In Section \ref{contract:sec:probform}, we define the contract selection problems, the ordered preferences property, and provide three applications of the problem. We propose a contract learning algorithm with variable number of simultaneous offers at each time step in Section \ref{contract:sec:alg1}, and analyze its performance in Section \ref{contract:sec:analysis1}. Then, we consider a variant of this algorithm with fixed number of offers in Section \ref{contract:sec:alg2}. Finally, we discuss the similarities and the differences between our work and the related work in Section \ref{contract:sec:discuss}.

 \section{Problem Formulation and Preliminaries} \label{contract:sec:probform}

In an online contract selection problem there is a seller who offers a bundle of $m \in \{1,2,\ldots\}$ contracts $\boldsymbol{x} \in {\cal X}_m$, where
\begin{align*}
{\cal X}_m := \left\{ (x_1, x_2,\ldots, x_m), \textrm{ such that } x_i \in (0,1], \forall i \in \{1,2,\ldots,m\}, x_i \leq x_{i+1}   \right\},
\end{align*}
to buyers arriving sequentially at time steps $t=1,2, \ldots, T$, where $T$ is the time horizon. Let $\boldsymbol{x}(t)$ be the bundle offered by the seller at time $t$. The buyer can accept a single contract $y \in \boldsymbol{x}(t)$ and pay $y$ to the seller, or it can reject all of the offered contracts and pay $0$ to the seller. 
Profit of the seller by time $T$ is
\begin{align*}
\sum_{t=1}^T (u(t) - c(t)) ,
\end{align*}
where $u(t)$ represents the revenue/payoff of the seller at time $t$ and $c(t)$ is any cost associated with offering the contracts at time $t$.
We have $u(t) =x$ if contract $x$ is accepted by the buyer at time $t$, $u(t) =0$ if none of the offered contracts at time $t$ is accepted by the buyer at time $t$.

The buyer who arrives at time $t$ has type $\theta_t$ which encodes its preferences into a payoff function. 
At each time step, the type of the buyer present at that time step is drawn according to the probability density function $f(\theta)$ on $[0,1]$ independently from the other time steps. We assume that buyer's type density is bounded, i.e.
\begin{align*}
f_{\max} := \sup_{\theta \in [0,1]} f(\theta) < \infty.
\end{align*}
Neither $\theta_t$ nor $f(\theta)$ is known by the seller at any time. 
Therefore, in order maximize its profit, the seller should learn the best set of contracts over time. The expected profit of the seller over time horizon $T$ is given by
\begin{align*}
E \left[\sum_{t=1}^T u(t) - c(t) \right],
\end{align*}
where the expectation is taken with respect to buyer's type distribution $f(\theta)$ and the seller's contract offering strategy. Our goal in this paper is to develop online learning algorithms for the seller to maximize its expected profit over time horizon $T$.

Let $U_b(x,\theta) : [0,1] \times [0,1] \rightarrow \mathbb{R}$ represent the payoff function of type $\theta$ buyer, which is a function of the contract accepted by the buyer. We assume that the seller knows $U_b$. For example, when the contracts represent data plans of wireless service providers, the service provider can know the worth of a 2 GB contract to a buyer who only needs 1 GB each month. For instance, the amount of payment for the 2 GB contract that exceeds the payment for a 1 GB can represent the loss of the buyer. Similarly, a 500 MB contract to a buyer who needs 1 GB a month can have a cost equal to the 500 MB shortage in data. Of course there should be a way to relate the monetary loss with the data loss, which can be captured by coefficients multiplying these two. These coefficients can also be known by the seller by analyzing previous buyer data. 

Based on its payoff function, the buyer either selects a contract from the offered bundle or it may reject all of the contracts in the bundle. If $\boldsymbol{x}= (x_1, x_2,\ldots, x_m)$ is offered to a type $\theta$ buyer, it will accept a contract randomly from the set 
\begin{align*}
\argmax_{ x \in \{0, x_1, \ldots, x_m \} } U_b(x, \theta),
\end{align*}
where $x=0$ implies that the buyer does not accept any of the offered contracts.
Since the seller knows the buyer's payoff function $U_b(x,\theta)$, for a given bundle of contracts $\boldsymbol{x}= (x_1, x_2,\ldots, x_m)$, it can compute which contracts will be accepted as a function of the buyer's type. For $y \in \boldsymbol{x}$, let $I_{y}(\boldsymbol{x})$ be the acceptance region of contract $y$, which is the values of $\theta$ for which contact $y$ will be accepted from the bundle $\boldsymbol{x}$ \cm{(usually an interval of the real line)}. \cm{For two intervals of the real line $I_1$ and $I_2$, $I_1 < I_2$ means that the rightmost point of $I_1$ is less than or equal to the leftmost point of $I_2$.}
We assume that the buyers payoff function induces {\em ordered preferences}, which means that for a bundle of contracts $(x_1, \ldots, x_m)$, the values of $\theta$ for which $x_i$ is accepted only depends on $x_{i-1}$, $x_i$ and $x_{i+1}$, and 
\begin{align*}
I_{x_{i-1}}(\boldsymbol{x}) < I_{x_{i}}(\boldsymbol{x}) < I_{x_{i+1}}(\boldsymbol{x}),
\end{align*}
for all $i \in \{1,2,\ldots,m-1 \}$, which means that the acceptance regions are ordered.
\begin{assumption}\label{ass:contract:ordered} \textbf{Ordered Preferences.}
$U_b(x,\theta)$ induces ordered preferences which means that 
for any $\boldsymbol{x} \in {\cal X}$, $I_{x_i}(\boldsymbol{x}) = (g(x_{i-1}, x_i), g(x_i, x_{i+1}) ]$. The function $g$ is such that $g(x,y) < g(y,z)$ for $x<y<z$, and $g$ is H{\"o}lder continuous with constant $L$ and exponent $\alpha$, i.e.,
\begin{align*}
|g(x_1, x_2) - g(y_1, y_2)| \leq L \sqrt{(|x_1 - y_1|^2 + |x_2 - y_2|^2)}^\alpha .
\end{align*}
\end{assumption}
Although the assumption on $U_b(x,\theta)$ is implicit, it is satisfied by many common payoff functions. Below we provide several examples. For notational convenience for any bundle of contracts $(x_1, x_2, \ldots, x_m)$, let $x_0 = 0$, $x_{m+1}=1$ and $g(x_m, 1) =1$.

\begin{example}{\bf Wireless Data Plan Contract.}\label{contract:example:1}
\cm{A wireless user's goal is to have a certain video/audio quality and download/upload quota. For a wireless user with type $\theta$, it is intuitive to assume that $(x-\theta)^+$ corresponds to loss in accepting a contract which offers data less than the demand, while $( \theta - x )^+$ corresponds to loss in accepting a contract $x$ which offers data more than the demand but have a higher price than the price of the demanded data service. Tradeoff between these two is captured by coefficients that relate the buyers weighting of these losses. Therefore we assume that the buyer's payoff function is given by}
\begin{align*}
U_b(x, \theta) := h(a (x - \theta)^+ + b ( \theta - x )^+),
\end{align*}
where 
\begin{align*}
(x - y)^+ = \max\{0, x-y \},
\end{align*}
and
$h: \mathbb{R_+} \rightarrow \mathbb{R}$ is a decreasing function. For data plan contracts,  
For this payoff function, the accepted contract from any bundle $(x_1, x_2, \ldots, x_m)$ of contracts is given as a function of the buyer's type in Figure \ref{contract:fig:acceptance:1}. It is easy to check that the boundaries of the acceptance regions are 
\begin{align*}
g(x_{i-1}, x_i) = \frac{b x_{i-1}+ a x_i}{a+b}, ~~ \forall i = 1,2,\ldots,m.
\end{align*}
Since 
\begin{align*}
| g(x_1, x_2) - g(y_1, y_2)| &= \left| \frac{b (x_1 - y_1)}{a+b} + \frac{a(x_2 - y_2)}{a+b} \right| \\
& \leq \max \{ |x_1 - y_1| , |x_2 - y_2|  \} \\
&\leq \sqrt{|x_1 - y_1|^2 + |x_2 - y_2|^2 },
\end{align*}
Assumption \ref{ass:contract:ordered} holds for this buyer payoff function with $L=1$ and $\alpha = 1$. 
\end{example}
\begin{figure}
\begin{center}
\includegraphics[width=0.8\columnwidth]{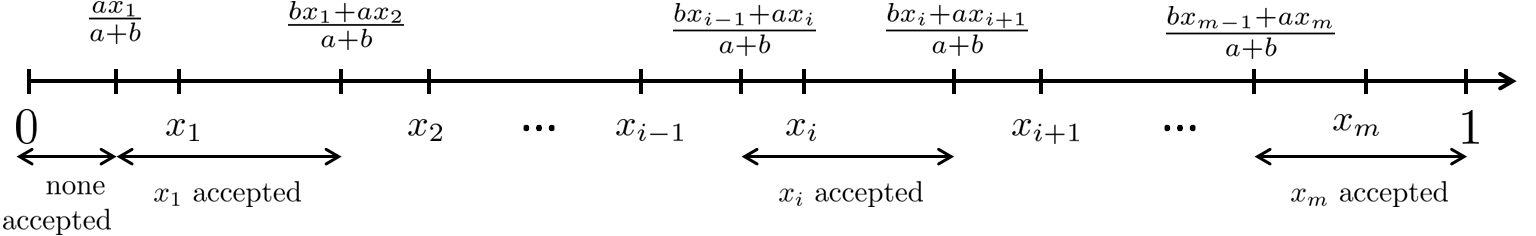}
\caption{acceptance region of bundle $(x_1, \ldots, x_m)$ for $U_b(x, \theta) = h(a (x - \theta)^+ + b ( \theta - x )^+)$} 
\label{contract:fig:acceptance:1}
\end{center}
\end{figure}

\begin{example} {\bf Secondary Spectrum Contract.}\label{contract:example:2}
Consider a secondary spectrum market, where the service provider leases excess spectrum to secondary users. For simplicity, assume that the service provider always have a unit bandwidth available. In general, due to the primary user activity the bandwidth available for leasing at time $t$ is $B_t \in [0,1]$, however, all our results in this paper will hold for dynamically changing available bandwidth, provided that the seller pays a penalty to the buyers for any bandwidth it offers but cannot guarantee to a buyer. By this way, the seller can still learn the buyer's type distribution by offering a bundle of contracts $\boldsymbol{x}$ for which there is some $x_i > B_t$.
The buyer's payoff function in this case is 
\begin{align*}
U_b(x, \theta) :=- a (\theta - x)^+ - x,
\end{align*}
where $x$ is the amount of money that the buyer pays to the seller by accepting contract $x$ and $a>1$ is a coefficient that relates the tradeoff between the loss in data and monetary loss. For this payoff function it can be shown that the acceptance region boundaries are
\begin{align*}
g(x_{i-1}, x_i) = \frac{ (a-1) x_{i-1}+ x_i}{a}, ~~ \forall i = 1,2,\ldots,m,
\end{align*}
and Assumption \ref{ass:contract:ordered} holds with $L=1$, $\alpha=1$.
\end{example}
\cem{
\begin{example} {\bf Recommendation System.}\label{contract:example:3}
Consider a recommendation system where the recommender makes $m$ recommendations to each arriving user. For instance these recommendations can be flights, hotels, etc. Each recommendation has a rating $x \in [0,1]$ which reflects some weighted average of quality and price. Based on its (unknown) type $\theta$ (budget, preferences, etc.) a user either chooses one of the recommendations or it does not accept any. For example, type of the user may represent its preferred rating, and the user may only accept a recommendation if its rating is within $(\theta - \epsilon, \theta+\epsilon)$ for some small $\epsilon>0$. Note that this user preference satisfies the ordered preferences property given in Assumption \ref{ass:contract:ordered} where 
\begin{align*}
I_{x_i}(\boldsymbol{x}) = I(x_i \in (\theta - \epsilon, \theta+\epsilon) \textrm{ and } x_i \textrm{ closest in } \boldsymbol{x} \textrm{ to } \theta)\footnote{If there exists more than one recommendation in $(\theta - \epsilon, \theta+\epsilon)$, the user chooses the closest recommendation to its type. If the distance is the same, then one of the recommendation is chosen arbitrarily.}
\end{align*}
Note that although we do not require the H{\"o}lder condition to hold here, our analysis throughout the paper hold for this type of user preference as well. Whenever a user chooses a recommendation, the recommender obtains reward $1$. The goal of the recommender is to maximize the number of sales. 
\end{example}
}

\begin{figure}
\begin{center}
\includegraphics[width=0.8\columnwidth]{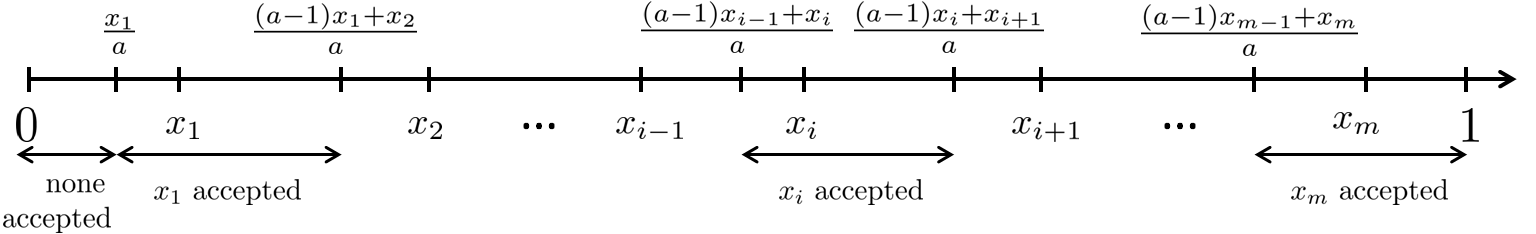}
\caption{acceptance region of bundle $(x_1, \ldots, x_m)$ for $U_b(x, \theta) = - a (\theta - x)^+ - x$} 
\label{contract:fig:acceptance:2}
\end{center}
\end{figure}
\cm{Above examples show that our contract selection framework has a large set of applications.} By Assumption \ref{ass:contract:ordered}, the expected payoff of a bundle of contracts $\boldsymbol{x} \in {\cal X}_m$ to the seller is 
\begin{align*}
U(\boldsymbol{x}) &= x_1 P( g(0,x_1) < \theta \leq g(x_1, x_2)) 
+ x_2 P( g(x_1,x_2) < \theta \leq g(x_2, x_3)) \\
&+ \ldots
+ x_m P(g(x_{m-1}, x_m) < \theta).
\end{align*}

Note that the seller's problem would be solved if it knew $f(\theta)$, since it could compute the best bundle of $m$ contracts, i.e.,
\begin{align}
\argmax_{\boldsymbol{x} \in {\cal X}_m} U(\boldsymbol{x}). \label{contract:eqn:optimal}
\end{align}

\begin{remark}
We do not require that the maximizer of (\ref{contract:eqn:optimal}) is a bundle of $m$ distinct contracts. Note that by definition of the set ${\cal X}_m$, the maximizer of (\ref{contract:eqn:optimal}) may be a bundle $(x_1, \ldots, x_m)$ for which $x_i = x_{i+1}$ for some $i \in \{1,2,\ldots, m-1 \}$. This is equivalent to offering $m-1$ contracts $(x_1, \ldots, x_{i-1}, x_i, x_{i+2}, \ldots, x_m)$. Indeed, our results hold when the seller's goal is to learn the best bundle of contracts that have at most $m$ contracts in it.
\end{remark}

The key idea behind the learning algorithms we design for the seller in the subsequent sections is to form estimates of the buyer's \cm{type} distribution by offering different sets of contracts. Each algorithm consists of exploration and exploitation phases. Although we stated that the seller offers $m$ contracts at each time step, in our first algorithm $m$ will vary over time \cm{(two different values for $m$; one for exploration, one for exploitation)}, so we denote it by $m(t)$. In our second algorithm $m$ will be fixed throughout the time horizon $T$. 

We also assume that the cost of offering $m$ contracts at the same time is given by $c(m)$, which increases with $m$. \cm{For example, in a recommendation system, the buyer can get confused when there is a huge list of recommendations, for wireless service providers the regulatory costs increase with the number of different plans offered, etc.} The seller's objective of maximizing the profit over time horizon $T$ is equivalent to minimizing the regret which is given by 
\begin{align}
R^{\textrm{alg}}(T) = T (U(\boldsymbol{x}^*) - c(m)) 
- E^{\textrm{alg}} \left[ \sum_{t=1}^T r(\boldsymbol{x}(t)) - c(m(t))\right], \label{eqn:prob_regret1}
\end{align}
where
\begin{align}
 \boldsymbol{x}^* \in \argmax_{\boldsymbol{x} \in {\cal X}_m} U(\boldsymbol{x}), \label{eqn:prob_optimal}
\end{align}
is the optimal set of $m$ contracts, $r(\boldsymbol{x}(t))$ is the payoff of the seller from the bundle offered at time $t$, and $\textrm{``alg''}$ is the learning algorithm used by the seller. We will drop the superscript $\textrm{``alg''}$ when the algorithm used by the seller is clear from the context. Note that for any algorithm with sublinear regret, the time averaged expected profit will converge to $U(\boldsymbol{x}^*) - c(m)$. 

 \section{A Learning Algorithm with Variable Number of Offers} \label{contract:sec:alg1}

In this section we present a learning algorithm which \cm{sequences time steps into exploration and exploitation steps, and uses the exploration steps to learn about the buyer's type distribution while using the exploitation steps to offer best bundle of contracts.}
The algorithm is called {\em type learning with variable number of offers} (TLVO), whose pseudocode is given in Figure \ref{fig:contract1}.

\rev{Instead of searching for the best bundle of contracts in ${\cal X}_m$ which is uncountable, the algorithm searches for the best bundle of contracts in the finite set 
\begin{align*}
{\cal L}_{m,T} := \left\{ \boldsymbol{x} = (x_1, \ldots x_m) : x_i \leq x_{i+1} \textrm{ and } x_i \in {\cal K}_T, \forall i \in \{1,\ldots, m\} \right\},
\end{align*}
where
\begin{align*}
{\cal K}_T := \left\{ \frac{1}{n_T}, \frac{2}{n_T}, \ldots, \frac{n_T -1}{n_T} \right\}.
\end{align*}
} 
Here $n_T$ is a non-decreasing function of the time horizon $T$. Since the best bundle in ${\cal L}_{m,T}$ might have an expected reward smaller than the expected reward of the best bundle in ${\cal X}_m$, in order to bound the regret due to this difference sublinearly over time, $n_T$ should be adjusted according to the time horizon.

\begin{figure}[h!]
\begin{center}
\fbox {
\begin{minipage}{0.9\columnwidth}
\flushleft{Type Learning with Variable Number of Offers (TLVO)}
\begin{algorithmic}[1]
\STATE {Parameters: $m$, $T$, $z(t), 1 \leq t \leq T$, $n_T$, ${\cal K}_T$, ${\cal L}_{m,T}$.}
\STATE {Initialize: set $t=1$, $N=0$, $\mu_i=0, N_i=0, \forall i \in {\cal K}_T$.}
\WHILE {$t\geq 1$}
\IF{$N < z(t)$}
\STATE{EXPLORE}
\STATE{Offer all contracts in ${\cal K}_T$ simultaneously.}
\IF{Any contract $x \in {\cal K}_T$ is accepted by the buyer}
\STATE{Get reward $x$. Find $k \in \{ 1,2,\ldots, n_T-1 \}$ such that $k/n_T =x$.}
\STATE{$++N_k$.}
\ENDIF
\STATE{$++N$.}
\ELSE
\STATE{EXPLOIT}
\STATE{$\mu_i = N_i/N, \forall i \in \{ 1,2,\ldots, n_T-1 \}$.}
\STATE{Offer bundle $\boldsymbol{x} = (x_1,\ldots, x_m)$, which is a solution to (\ref{alg1:eqn:1}) based on $\mu_i$'s.}
\STATE{If some $x \in \boldsymbol{x}$ is accepted, get reward $x$.}
\ENDIF
\STATE{$++t$.}
\ENDWHILE
\end{algorithmic}
\end{minipage}
} \caption{pseudocode of TLVO} \label{fig:contract1}
\end{center}
\end{figure}

Exploration and exploitation steps are sequenced in a deterministic way. This sequencing is provided by a {\em control function} $z(t)$ which is a parameter of the learning algorithm. Let $N(t)$ be the number of explorations up to time $t$. If $N(t) < z(t)$, time $t$ will be an exploration step. Otherwise time $t$ will be an exploitation step. While $z(t)$ can be any sublinearly increasing function, we will optimize over $z(t)$ in our analysis.

In an exploration step, TLVO estimates the distribution of buyer's type by simultaneously offering the set of $n_T - 1$ uniformly spaced contracts in ${\cal K}_T$.
Based on the accepted contract at time $t$, the seller learns the part of the type space that the buyers type at $t$ lies in, and uses this to form sample mean estimates of the distribution of the buyer's type. \cm{An example is shown in Figure \ref{contract:fig:acceptance:3}.} We simply call the contract $i/n_T \in {\cal K}_T$ as the $i$th contract. Let $\theta$ be the unknown type of the buyer at some exploration step. If $i$th contract is accepted by the buyer, then the seller knows that
\begin{figure}
\begin{center}
\includegraphics[width=0.8\columnwidth]{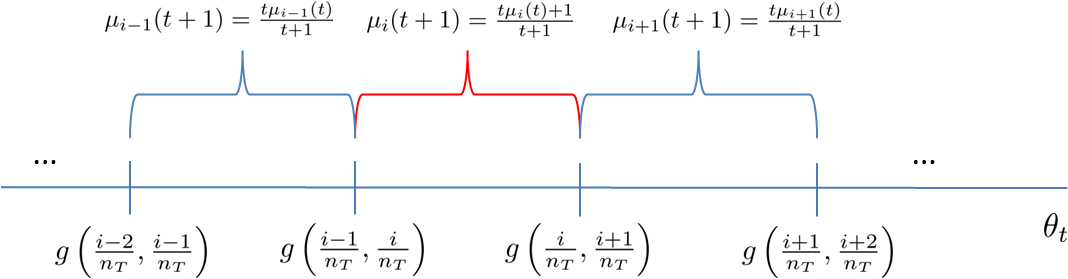}
\caption{an illustration of the sample mean update of buyer's type distribution when contract $i$ is accepted at time $t$} 
\label{contract:fig:acceptance:3}
\end{center}
\end{figure}
\begin{align*}
g \left(\frac{i-1}{n_T}, \frac{i}{n_T} \right) < \theta \leq g \left( \frac{i}{n_T}, \frac{i+1}{n_T} \right).
\end{align*}
Let $N_i(t)$ be the number of times contract $i$ is accepted in an exploration step up to $t$. Then the sample mean estimate of
\begin{align*}
P \left( g \left(\frac{i-1}{n_T}, \frac{i}{n_T} \right) < \theta \leq g \left(\frac{i}{n_T}, \frac{i+1}{n_T} \right)  \right), 
\end{align*}
is given by
\begin{align*}
\mu_i(t) := \frac{N_i(t)}{N(t)}.
\end{align*}

%

%
In an exploitation step, TLVO offers a bundle of $m$ contracts chosen from ${\cal L}_{m,T}$, which maximizes the seller's estimated expected payoff. For constants $\theta_l$ and $\theta_u$, let $\hat{P}_t( \theta_l < \theta \leq \theta_u )$ be the estimate of $P(\theta_l < \theta \leq \theta_u)$ at time $t$. TLVO computes this estimate based on the estimates $\mu_i(t)$ in the following way:
\begin{align*}
\hat{P}_t( \theta_l < \theta \leq \theta_u ) = \sum_{i = j_{-}(\theta_l)} ^{j_{+}(\theta_u)} \mu_i(t),
\end{align*}
where
\begin{align*}
j_{-}(\theta_l) = \min \left\{i \in \{1,\ldots, n_T-1 \} \textrm{ such that } 
g \left( \frac{i-1}{n_T}, \frac{i}{n_T} \right) \geq \theta_l   \right\}, 
\end{align*}
and
\begin{align*}
j_{+}(\theta_u) = \min \left\{i \in \{1,\ldots, n_T-1 \}  \textrm{ such that } 
g \left( \frac{i}{n_T}, \frac{i+1}{n_T}  \right) \geq \theta_u   \right\}. 
\end{align*}

We can write $x_i \in {\cal L}_{m,T}$ as $k_i/n_T$ for some $k_i \in \{1, 2, \ldots, n_T-1 \}$. If time $t$ is an exploitation step, TLVO computes the estimated best bundle of contracts $\boldsymbol{x}(t)$ by solving the following optimization problem.
\begin{align}
\boldsymbol{x}(t) &= \argmax_{\boldsymbol{x} \in {\cal L}_{m,T}} \hat{U}_t(\boldsymbol{x}), \label{alg1:eqn:1}
\end{align}
where 
\begin{align*}
\hat{U}_t(\boldsymbol{x}) &:= 
x_1 \hat{P}_t \left( g(0, x_1) < \theta \leq g(x_1, x_2) \right)
+ x_2 \hat{P}_t \left( g(x_1, x_2) < \theta \leq g(x_2, x_3) \right) \notag \\
&+ \ldots 
+ x_m \hat{P}_t \left( g(x_{m-1} , x_m) < \theta \right).
\end{align*}
%
%
Note that there might be more than one maximizer to (\ref{alg1:eqn:1}). In such a case, TLVO arbitrarily chooses one of the maximizer bundles. Maximization in (\ref{alg1:eqn:1}) is a combinatorial optimization problem. In general solution to such a problem is NP-hard.
We assume that the solution is provided to the algorithm by an oracle. This is a common assumption in online learning literature, for example used in \cite{dani2008stochastic}. Therefore, we do not consider the computational complexity of this operation. Although we do not provide a computationally efficient solution for (\ref{alg1:eqn:1}), there exists computationally efficient methods for some special cases.
We discuss more on this in Section \ref{contract:sec:discuss}.

We analyze the regret of TLVO in the next section.

 \section{Analysis of the Regret of TLVO} \label{contract:sec:analysis1}

In this section we upper bound the regret of TLVO. Let
\begin{align*}
S = \{\boldsymbol{x} \in {\cal L}_{m,T}: |U(\boldsymbol{x}^*) - U(\boldsymbol{x})| < \beta n_T^{-\alpha}  \},
\end{align*}
be the set of near-optimal bundles of contracts where $\alpha$ is the H{\"o}lder exponent in Assumption \ref{ass:contract:ordered},
and $\beta = 5 m f_{\max} L 2^{\alpha/2}$ is a constant where $L$ is the H{\"o}lder constant in Assumption \ref{ass:contract:ordered}. \cm{Denote the complement of $S$ on ${\cal L}_{m,T}$, i.e., ${\cal L}_{m,T} - S$,  by $S^c$.}
Let $T_{\boldsymbol{x}}(t)$ be the number of times $\boldsymbol{x} \in {\cal L}_{m,T}$ is offered at exploitation steps by time $t$.
For TLVO, regret given in (\ref{eqn:prob_regret1}) is upper bounded by
\begin{align}
R(T) &\leq \sum_{\boldsymbol{x} \in S} (U(\boldsymbol{x}^*) - U(\boldsymbol{x})) E \left[ T_{\boldsymbol{x}}(T) \right] \notag \\
&+ \sum_{\boldsymbol{x} \in S^c} (U(\boldsymbol{x}^*) - U(\boldsymbol{x})) E \left[ T_{\boldsymbol{x}}(T) \right] \notag \\
&+ N(T) (U(\boldsymbol{x}^*) + c(n_T) - c(m)), \label{r1:eqn:regret1}
\end{align}
by assuming zero worst-case payoff in exploration steps.
First term in (\ref{r1:eqn:regret1}) is the contribution of selecting a {\em nearly} optimal bundle of contracts in exploitation steps, second term is the contribution of selecting a suboptimal bundle of contracts in the exploitation steps, and the third term is the worst-case contribution during the exploration steps to the regret.

The following theorem gives an upper bound on the regret of TLVO.
\begin{theorem}\label{thm:main1}
The regret of the seller using TLVO with time horizon $T$ is upper bounded by
\begin{align}
R(T) &\leq 5 m f_{\max} L 2^{\alpha/2} n_T^{-\alpha} (T-N(T)) + N(T) (U(\boldsymbol{x}^*) + c(n_T) - c(m)) \notag \\
&+  2 n_T \sum_{t=1}^T e^{\frac{- f^2_{\max} L^2 2^{\alpha} N(t)}{n_T^{2+ 2\alpha}}}. \notag
\end{align}
\end{theorem}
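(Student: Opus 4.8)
The plan is to start from the regret decomposition already established in (\ref{r1:eqn:regret1}), which splits $R(T)$ into a near-optimal term (the sum over $S$), a suboptimal term (the sum over $S^c$), and the exploration term. The second term of the theorem's bound is exactly the exploration term $N(T)(U(\boldsymbol{x}^*) + c(n_T) - c(m))$, so it is carried over unchanged. For the near-optimal term, I would invoke the definition of $S$: every $\boldsymbol{x} \in S$ satisfies $U(\boldsymbol{x}^*) - U(\boldsymbol{x}) < \beta n_T^{-\alpha}$ with $\beta = 5 m f_{\max} L 2^{\alpha/2}$, and since each exploitation step offers exactly one bundle, $\sum_{\boldsymbol{x} \in S} E[T_{\boldsymbol{x}}(T)] \le T - N(T)$. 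Combining these two facts gives the first term $\beta n_T^{-\alpha}(T - N(T))$ of the bound.

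The heart of the argument is the suboptimal term. Using $U(\boldsymbol{x}^*) - U(\boldsymbol{x}) \le 1$ (all payoffs lie in $[0,1]$), I would bound $\sum_{\boldsymbol{x} \in S^c}(U(\boldsymbol{x}^*) - U(\boldsymbol{x})) E[T_{\boldsymbol{x}}(T)] \le \sum_{\boldsymbol{x} \in S^c} E[T_{\boldsymbol{x}}(T)] = \sum_t P(\boldsymbol{x}(t) \in S^c \text{ and } t \text{ is an exploitation step})$, and then show that each such probability is at most $2 n_T \exp(-f^2_{\max} L^2 2^{\alpha} N(t) / n_T^{2+2\alpha})$. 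To that end I would establish a deterministic clean-event implication: if $|\mu_i(t) - E[\mu_i(t)]| < \delta$ for every contract $i$, with $\delta := f_{\max} L 2^{(\alpha-1)/2} n_T^{-(1+\alpha)}$, then the maximizer of $\hat{U}_t$ must lie in $S$. The proof of this implication decomposes $\hat{U}_t(\boldsymbol{x}) - U(\boldsymbol{x})$ into an estimation part $\hat{U}_t(\boldsymbol{x}) - \bar{U}_t(\boldsymbol{x})$, where $\bar{U}_t$ replaces each $\mu_i$ by $E[\mu_i]$, and a discretization part $\bar{U}_t(\boldsymbol{x}) - U(\boldsymbol{x})$. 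On the clean event the estimation part, being $\sum_i x_i$ times a block sum of the deviations $(\mu_j - E[\mu_j])$, is at most $(n_T - 1)\delta = O(n_T^{-\alpha})$. The discretization part is controlled by the H{\"o}lder continuity of $g$ together with $f_{\max}$: since $j_{-}$ and $j_{+}$ place each true endpoint within one grid cell, every endpoint differs from $g(x_{i-1},x_i)$ by at most $L (\sqrt{2}/n_T)^{\alpha} = L 2^{\alpha/2} n_T^{-\alpha}$, giving a per-contract probability error of at most $f_{\max} L 2^{\alpha/2} n_T^{-\alpha}$. Summing over the $m$ contracts, and accounting for the grid approximation of the continuous optimizer $\boldsymbol{x}^*$, keeps $|\hat{U}_t(\boldsymbol{x}) - U(\boldsymbol{x})|$ below $\tfrac{1}{2}\beta n_T^{-\alpha}$ for all $\boldsymbol{x}$; this is precisely what the choice $\beta = 5 m f_{\max} L 2^{\alpha/2}$ is engineered to buy, after which a standard two-sided gap argument forces the $\hat{U}_t$-maximizer into $S$.

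With the clean-event implication in hand, $P(\boldsymbol{x}(t) \in S^c)$ at an exploitation step is at most $P(\exists\, i : |\mu_i(t) - E[\mu_i(t)]| \ge \delta)$. Since $\mu_i(t) = N_i(t)/N(t)$ is an average of $N(t)$ i.i.d.\ Bernoulli indicators (whether contract $i$ is accepted, over the independent exploration rounds), Hoeffding's inequality gives $P(|\mu_i(t) - E[\mu_i(t)]| \ge \delta) \le 2 e^{-2 N(t) \delta^2}$, and a union bound over the $n_T - 1$ contracts yields $P(\boldsymbol{x}(t) \in S^c) \le 2 n_T e^{-2 N(t) \delta^2} = 2 n_T e^{-f^2_{\max} L^2 2^{\alpha} N(t)/n_T^{2+2\alpha}}$ by the choice of $\delta$. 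Summing over $t$ produces the third term of the stated bound, completing the proof.

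I expect the main obstacle to be the discretization bookkeeping inside the clean-event implication, namely verifying that the combined error from (i) approximating the continuous optimizer $\boldsymbol{x}^*$ by a grid bundle, (ii) replacing the true interval endpoints $g(x_{i-1},x_i)$ by their grid-rounded versions inside $\hat{P}_t$, and (iii) the estimation error $(n_T-1)\delta$, all fit within half the margin $\beta n_T^{-\alpha}$ separating $S$ from $S^c$. This is exactly where the constant $5$, the factor $2^{\alpha/2}$ coming from converting the $\ell_2$ grid spacing through the H{\"o}lder bound, and the linear-in-$m$ scaling all enter, and it is the step demanding the most careful accounting.
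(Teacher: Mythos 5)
Your proposal follows essentially the same route as the paper's own proof: the same decomposition into near-optimal, suboptimal, and exploration terms; the bound $U(\boldsymbol{x}^*)-U(\boldsymbol{x})\leq 1$ reducing the $S^c$ term to $\sum_{t=1}^T P(\boldsymbol{x}(t)\in S^c)$; a clean event on the deviations $|\mu_i(t)-p_i|$ handled by Chernoff--Hoeffding plus a union bound over the $n_T-1$ contracts; the H{\"o}lder-based per-contract endpoint error $f_{\max}L2^{\alpha/2}n_T^{-\alpha}$; and the gap argument through the grid optimizer $\boldsymbol{y}^*=\argmax_{\boldsymbol{x}\in{\cal L}_{m,T}}U(\boldsymbol{x})$. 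The only differences are in constant bookkeeping (your $\delta$ is engineered so that Hoeffding yields exactly the stated exponent, whereas the paper's slightly larger threshold gives a factor-$2$ stronger exponent that it then loosens), which does not change the argument.
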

\begin{remark}\label{remark1}
\rev{In this form, the regret is linear in $n_T$ and $T$. The first term in the regret decreases with $n_T$ while the second and third terms increase with $n_T$. Since $T$ is known by the seller, $n_T$ can be optimized as a function of $T$.}
\end{remark}
\begin{proof}
Let $\delta^*_{\boldsymbol{x}} = U(\boldsymbol{x}^*) - U(\boldsymbol{x})$. By definition of the set $S$, we have
\begin{align}
\sum_{\boldsymbol{x} \in S} \delta^*_{\boldsymbol{x}} E [T_{\boldsymbol{x}}(T)] &\leq \max_{\boldsymbol{x} \in S} \delta^*_{\boldsymbol{x}} \sum_{\boldsymbol{x} \in S} E[T_{\boldsymbol{x}}(T)] \notag \\
&\leq \beta n_T^{-\alpha} (T-N(T)). \label{r1:eqn:part1}
\end{align}
%
%
%
%
%
Next, we consider the term
\begin{align*}
&\sum_{\boldsymbol{x} \in S^c} (U(\boldsymbol{x}^*) - U(\boldsymbol{x})) E \left[ T_{\boldsymbol{x}}(T) \right].
\end{align*}
Note that even if we bound $E \left[ T_{\boldsymbol{x}}(T) \right]$ for all $\boldsymbol{x} \in S^c$, in the worst case $|S^c| = c n_T^m$, for some $c>0$.
Therefore a bound that depends on $n_T^m$ will scale badly for large $m$. To overcome this difficulty, we will show that if the distribution function has sufficiently accurate sample mean estimates $\mu_i(t)$ for all
\begin{align*}
p_i :=P \left( g \left(\frac{i-1}{n_T}, \frac{i}{n_T} \right) < \theta \leq g \left(\frac{i}{n_T}, \frac{i+1}{n_T} \right)  \right), ~~ i \in \{1,2,\ldots, n_T-1 \},
\end{align*}
then the probability that some bundle in $S^c$ is offered will be small.
Let $T_{S^c}(t)$ be the number of times a bundle from $S^c$ is offered in exploitation steps by time $t$. \cm{Since for any $\boldsymbol{x} \in {\cal X}_m$, $U(\boldsymbol{x}) \leq 1$, we have}
\begin{align}
\sum_{\boldsymbol{x} \in S^c} (U(\boldsymbol{x}^*) - U(\boldsymbol{x})) E \left[ T_{\boldsymbol{x}}(T) \right] \leq  E \left[ T_{S^c}(T) \right], \label{r1:eqn:alt1}
\end{align}
where
\begin{align}
E \left[ T_{S^c}(T) \right] = E \left[ \sum_{t=1}^T I(\boldsymbol{x}(t) \in S^c)  \right] 
= \sum_{t=1}^T P(\boldsymbol{x}(t) \in S^c). \label{r1:eqn:alt2}
\end{align}
For convenience let $x_0 = 0, x_{m+1}=1$ and $g(x_m, x_m+1) =1$.
For any $x_i \in \boldsymbol{x} \in {\cal L}_{m,T}$, we can write
\begin{align*}
P \left( g(x_{i-1}, x_i) < \theta \leq g(x_i, x_{i+1}) \right) 
&= P(g(x_{i-1}, x_i) < \theta \leq j_{-}(g(x_{i-1}, x_i))) \\
&+ \sum_{i = j_{-} (g(x_{i-1},x_i))}^{ j_{+}(g(x_i,x_{i+1}))}  p_i  \\
&- P( g(x_i, x_{i+1}) < \theta \leq j_{+}(g(x_i, x_{i+1}))).
\end{align*}
Let 
\begin{align*}
err_{\boldsymbol{x}}(x_i) 
&= \left| P(g(x_{i-1}, x_i) < \theta \leq j_{-}(g(x_{i-1}, x_i))) \right. \\
& \left. - P( g(x_i, x_{i+1}) < \theta \leq j_{+}(g(x_i, x_{i+1}))) \right.|.
\end{align*}
\begin{figure}
\begin{center}
\includegraphics[width=0.8\columnwidth]{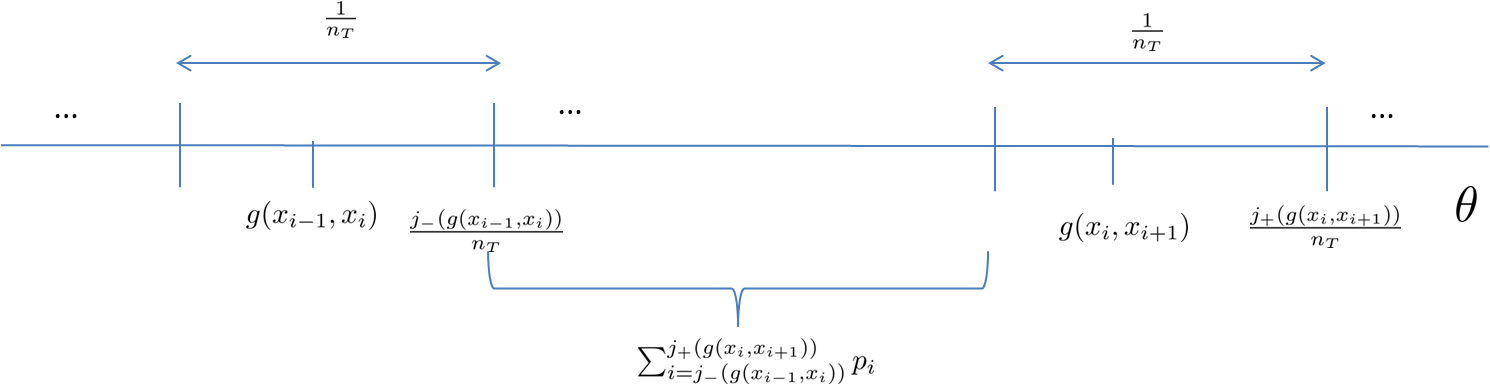}
\caption{decomposition of $P \left( g(x_{i-1}, x_i) < \theta \leq g(x_i, x_{i+1}) \right)$} 
\label{fig:decomposition}
\end{center}
\end{figure}
\cm{Figure \ref{fig:decomposition} shows the decomposition of $P \left( g(x_{i-1}, x_i) < \theta \leq g(x_i, x_{i+1}) \right)$ in terms of the acceptance regions defined by contracts in ${\cal L}_{m,T}$ and the error terms.} It is easy to see from Figure \ref{fig:decomposition} that
\begin{align*}
\left(g(x_{i-1}, x_i), \frac{j_{-}(g(x_{i-1}, x_i))}{n_T} \right] \subset \left(\frac{j_{-}(g(x_{i-1}, x_i)) -1}{n_T} , \frac{j_{-}(g(x_{i-1}, x_i))}{n_T} \right],
\end{align*}
and
\begin{align*}
\left(g(x_{i}, x_{i+1}), \frac{j_{+}(g(x_{i}, x_{i+1}))}{n_T} \right] \subset \left(\frac{j_{+}(g(x_{i}, x_{i+1})) -1}{n_T} , \frac{j_{+}(g(x_{i}, x_{i+1}))}{n_T}\right].
\end{align*}
Then, by Assumption \ref{ass:contract:ordered}, we have for any $x_i \in \boldsymbol{x} \in {\cal L}_{m,T}$
\begin{align}
& err_{\boldsymbol{x}}(x_i)  \notag \\
&\leq \max\{ P(g(x_{i-1}, x_i) < \theta \leq j_{-}(g(x_{i-1}, x_i))), P( g(x_i, x_{i+1}) < \theta \leq j_{+}(g(x_i, x_{i+1})))   \} \notag \\
&\leq f_{\max} L 2^{\alpha/2} n_T^{-\alpha}. \label{eqn:variable:error}
\end{align}

Consider the event
\begin{align}
\xi_t =  \bigcap_{i=1}^{n_T-1} \left\{ |\mu_i(t) - p_i| \leq \frac{(\beta- m f_{\max} L 2^{\alpha/2}) n_T^{-\alpha} }{4 n_T m } \right\} \notag.
\end{align}
If $\xi_t$ happens, then for any $1 \leq a < b \leq n_T -1$
\begin{align}
\left|  \sum_{i=a}^b \mu_i(t) - \sum_{i=a}^b p_i(t)   \right| &\leq (b-a) \frac{(\beta-m f_{\max} L 2^{\alpha/2})  n_T^{-\alpha} }{4 n_T m }  \notag \\
&\leq \frac{(\beta-m f_{\max} L 2^{\alpha/2}) n_T^{-\alpha} }{4 m }, \notag 
\end{align}
which implies that for any $\boldsymbol{x} \in {\cal L}_{m,T}$
\begin{align}
| \hat{U}_t(\boldsymbol{x}) - U(\boldsymbol{x})| 
 &\leq x_1   \sum_{i = j_{-} (g(0,x_1))}^{ j_{+}(g(x_1,x_2))}  \left| \mu_i(t) - p_i  \right| + err_{\boldsymbol{x}}(x_i) +  \ldots \notag \\
&+  x_m \sum_{i= j_{-} (g(x_{m-1},x_m))}^{j_{+}(g(x_{m},x_{m+1}))} \left| \mu_i(t) - p_i  \right| +  err_{\boldsymbol{x}}(x_m) \notag \\  
&\leq  \sum_{i = j_{-} (g(0,x_1))}^{ j_{+}(g(x_1,x_2))}  \left| \mu_i(t) - p_i  \right| + err_{\boldsymbol{x}}(x_i) +  \ldots \notag \\
&+  \sum_{i= j_{-} (g(x_{m-1},x_m))}^{j_{+}(g(x_{m},x_{m+1}))} \left| \mu_i(t) - p_i  \right| +  err_{\boldsymbol{x}}(x_m) \notag \\  &\leq 2m f_{\max}L 2^{\alpha/2} n_T^{-\alpha}. \label{eqn:jul9_2}
\end{align}
Let $\boldsymbol{y}^* = \argmax_{\boldsymbol{x} \in {\cal L}_{m,T}} U(\boldsymbol{x})$. By Assumption (\ref{ass:contract:ordered}) we  have
\begin{align*}
U(\boldsymbol{x}^*)-U(\boldsymbol{y}^*) \leq m f_{\max} L 2^{\alpha/2} n_T^{-\alpha}~.
\end{align*}
Then, using the definition of the set $S^c$, which denotes the set of suboptimal bundles of contracts, for any $\boldsymbol{x} \in S^c$, we have
\begin{align*}
U(\boldsymbol{y}^*) - U(\boldsymbol{x}) > (\beta - m f_{\max} L 2^{\alpha/2}) L n_T^{-\alpha} = 4m f_{\max} L 2^{\alpha/2} n_T^{-\alpha}.
\end{align*}
Since by (\ref{eqn:jul9_2}) the estimated payoff of any bundle $\boldsymbol{x} \in {\cal L}_{m,T}$ is within $2 m f_{\max} L 2^{\alpha/2} n_T^{-\alpha}$ of its true value, the event $\xi_t$ implies that for any $\boldsymbol{x} \in S^c$
\begin{align*}
\hat{U}_t(\boldsymbol{x}) \leq \hat{U}_t(\boldsymbol{y}^*),
\end{align*}
which means
\begin{align*}
\xi_t \subset \{ \hat{U}_t(\boldsymbol{x}) \leq \hat{U}_t(\boldsymbol{y}^*), \forall \boldsymbol{x} \in S^c \},
\end{align*}
and
\begin{align*}
\{ \hat{U}_t(\boldsymbol{x}) > \hat{U}_t(\boldsymbol{y}^*) \textrm{ for some } x \in S^c \} \subset \xi_t^c.
\end{align*}
Therefore
\begin{align}
P(\boldsymbol{x}(t) \in S^c) &\leq P \left( \bigcup_{i=1}^{n_T -1} \{ |\mu_i(t) - p_i| > \frac{(\beta- m f_{\max} L 2^{\alpha/2}) n_T^{-\alpha} }{4 n_T m } \} \right) \notag \\
&\leq \sum_{i=1}^{n_T - 1} P \left( |\mu_i(t) - p_i| > \frac{(\beta- m f_{\max} L 2^{\alpha/2}) n_T^{-\alpha} }{4 n_T m }  \right) \notag \\
&\leq 2 n_T e^{\frac{- f^2_{\max} L^2 2^{\alpha} N(t)}{n_T^{2+ 2\alpha}}}, \notag
\end{align}
by using a Chernoff-Hoeffding bound. Using the last result in (\ref{r1:eqn:alt1}), we get
\begin{align}
\sum_{\boldsymbol{x} \in S^c} (U(\boldsymbol{x}^*) - U(\boldsymbol{x})) E \left[ T_{\boldsymbol{x}}(T) \right] \leq 2 n_T \sum_{t=1}^T e^{\frac{- f^2_{\max} L^2 2^{\alpha} N(t)}{n_T^{2+ 2\alpha}}}. \label{r1:eqn:alt3}
\end{align}
We get the main result by substituting (\ref{r1:eqn:part1}) and (\ref{r1:eqn:alt3}) into (\ref{r1:eqn:regret1}).
\end{proof}

The following corollary gives a sublinear regret result for a special case of parameters.

\begin{corollary}\label{corr:1}
When the cost of offering $n$ contracts simultaneously, i.e., $c(n) \leq n^{\gamma}$, for all $0<n<T$, for some $\gamma >0$, the regret of the seller that runs TLVO with
\begin{align*}
n_T &= \left\lfloor (f_{\max} L 2^{\alpha/2})^{\frac{2}{4+2 \alpha}} 
\left( \frac{T}{\log T} \right)^{\frac{1}{4 + 2 \alpha}} \right\rfloor, \\
z(t) &= \left( \frac{1}{f_{\max} L 2^{\alpha/2}}    \right)^{\frac{2+6\alpha}{2+\alpha}}  \left(  \frac{T}{\log T}   \right)^{\frac{2+ 2\alpha}{4+ 2\alpha}} \log t,
\end{align*}
where $\left\lfloor y \right\rfloor$ is the largest integer smaller than equal to $y$, is upper bounded by
\begin{align*}
R(T) 
&\leq 5 m (f_{\max} L 2^{\alpha/2} )^{\frac{2}{2+\alpha}} (\log T)^{\frac{\alpha}{4+2\alpha}} T^{\frac{4+\alpha}{4+2\alpha}} \\
&+ \left( \frac{1}{f_{\max} L 2^{\alpha/2}}  \right)^{\frac{2+6\alpha}{2+\alpha}} 
(\log T)^{\frac{2+ \gamma}{4 + 2\alpha}} T^{\frac{2+2\alpha+\gamma}{4+2\alpha}} \\
&+ 2 (f_{\max} L 2^{\alpha/2} )^{\frac{1}{2+\alpha}} (\log T + 1) (\log T)^{\frac{1}{4+2\alpha}} T^{\frac{1}{4+ 2\alpha}}.
\end{align*}
Hence 
\begin{align*}
R(T) = O(m T^{(2+2\alpha+\gamma)/(4+2\alpha)} (\log T)^{2/(4+2\alpha)}),
\end{align*}
which is sublinear in $T$ for $\gamma<2$.
\end{corollary}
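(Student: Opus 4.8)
The plan is to substitute the prescribed $n_T$ and $z(t)$ directly into the three-term bound of Theorem~\ref{thm:main1}, bound each term separately, and then read off the dominant power of $T$. Throughout I write $B := f_{\max}L2^{\alpha/2}$ for brevity, so that $n_T=\lfloor B^{2/(4+2\alpha)}(T/\log T)^{1/(4+2\alpha)}\rfloor$ and the exponential in the third term has rate $B^2=f_{\max}^2 L^2 2^{\alpha}$. The one preliminary fact I need is a bound on the exploration count: since TLVO explores exactly when $N(t)<z(t)$ and $z$ is nondecreasing, the last exploration time $\tau\le T$ satisfies $N(\tau)<z(\tau)\le z(T)$, whence $N(T)\le z(T)+1$; and at any exploitation step $t$ one has $N(t)\ge z(t)$ by the same rule.

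For the first term I would use $T-N(T)\le T$ and substitute $n_T$. The factor $B\,n_T^{-\alpha}$ collapses to $B^{\,1-2\alpha/(4+2\alpha)}=B^{2/(2+\alpha)}$, and $T\cdot(T/\log T)^{-\alpha/(4+2\alpha)}=T^{(4+\alpha)/(4+2\alpha)}(\log T)^{\alpha/(4+2\alpha)}$, reproducing the first line of the claim with the constant $5m$ inherited verbatim from Theorem~\ref{thm:main1}. For the second term I would bound the bracket using $U(\boldsymbol{x}^*)\le 1$, $c(m)\ge 0$ and $c(n_T)\le n_T^{\gamma}$ by $1+n_T^{\gamma}$, and multiply by $N(T)\le z(T)+1$. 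Substituting $z(T)$ and $n_T^{\gamma}$ and collecting the $T$-powers $\tfrac{2+2\alpha}{4+2\alpha}+\tfrac{\gamma}{4+2\alpha}$ yields the exponent $\tfrac{2+2\alpha+\gamma}{4+2\alpha}$, with a constant determined by the prefactors of $z(T)$ and $n_T^{\gamma}$.

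The heart of the argument is the third term $2n_T\sum_{t=1}^{T}e^{-B^2 N(t)/n_T^{2+2\alpha}}$, and this is the step I expect to be the main obstacle. The key observation, and the reason for the specific exponents chosen for $n_T$ and $z(t)$, is that they are tuned so that the factor $(T/\log T)^{(2+2\alpha)/(4+2\alpha)}$ in $z(t)$ exactly cancels $n_T^{2+2\alpha}$, leaving $B^2 z(t)/n_T^{2+2\alpha}$ proportional to $\log t$ with a $T$-independent constant. Since exploration steps offer all of ${\cal K}_T$ and so never trigger the $S^c$ event, I only need the summand at exploitation steps, where $N(t)\ge z(t)$; hence each surviving summand is at most $e^{-\log t}=1/t$, and $\sum_{t=1}^{T}t^{-1}\le 1+\log T$. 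This gives the bound $2n_T(1+\log T)$, which after substituting $n_T$ produces the third line. Checking that the cancellation makes the exponent constant at least one, and carefully accounting for the early exploration steps where $N(t)$ is small, is the delicate part.

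Finally I would assemble the three bounds and simplify. The floor in $n_T$ is absorbed by using $n_T\le B^{2/(4+2\alpha)}(T/\log T)^{1/(4+2\alpha)}$ to control the $n_T^{\gamma}$ and $n_T$ factors and $n_T\ge B^{2/(4+2\alpha)}(T/\log T)^{1/(4+2\alpha)}-1$ (for $T$ large) to control $n_T^{-\alpha}$; the negative powers of $\log T$ arising from the $(T/\log T)$ factors are replaced by the corresponding positive powers, a valid over-estimate since $\log T\ge 1$, and this is what turns the exponent $\tfrac{2-\gamma}{4+2\alpha}$ into the stated $\tfrac{2+\gamma}{4+2\alpha}$. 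Comparing the three $T$-exponents $\tfrac{4+\alpha}{4+2\alpha}$, $\tfrac{2+2\alpha+\gamma}{4+2\alpha}$ and $\tfrac{1}{4+2\alpha}$, the exploration-cost exponent $\tfrac{2+2\alpha+\gamma}{4+2\alpha}$ is the binding one as $\gamma\uparrow 2$, and it is strictly below $1$ precisely when $\gamma<2$, which yields both the $O(\cdot)$ statement and the claimed sublinearity.
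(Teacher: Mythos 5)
Your proposal is correct and follows essentially the same route as the paper's proof: pick $z(t)$ so that $f^2_{\max}L^2 2^{\alpha}N(t)/n_T^{2+2\alpha}\geq \log t$ whenever the exponential summand matters (making each such summand at most $1/t$, hence the $\log T+1$ factor after summing), choose $n_T$ to balance the approximation term against the exploration term, and substitute into Theorem \ref{thm:main1}. If anything, your handling of the delicate step is more careful than the paper's: the paper justifies the bound by asserting that at every $t$ either $N(t)\geq z(t)$ or $z(t)-1\leq N(t)<z(t)$, which fails during the initial exploration period when $N(t)$ is far below $z(t)$, whereas you correctly observe that the summand only needs to be controlled at exploitation steps (the only steps where $P(\boldsymbol{x}(t)\in S^c)$ can be nonzero, by the definition of $T_{\boldsymbol{x}}(t)$), and there $N(t)\geq z(t)$ holds by the algorithm's rule.
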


\begin{proof}
We want 
\begin{align*}
e^{\frac{- f^2_{\max} L^2 2^{\alpha} N(t)}{n_T^{2+ 2\alpha}}} \leq \frac{1}{t}.
\end{align*}
For this,we should have
\begin{align*}
\frac{- f^2_{\max} L^2 2^{\alpha} N(t)}{n_T^{2+ 2\alpha}} \leq - \log t,
\end{align*}
which implies
\begin{align}
N(t) \geq \frac{(n_T)^{2+2\alpha}}{f^2_{\max} L^2 2^\alpha} \log t. \label{r1:eqn:cor1}
\end{align}
Note that at each time $t$ either $N(t) \geq z(t)$ or $z(t) - 1 \leq N(t) < z(t)$ so we  chose
\begin{align}
z(t) = \frac{(n_T)^{2+2\alpha}}{f^2_{\max} L^2 2^\alpha} \log t+ 1. \notag 
\end{align}
Note that $z(t)$ in this form depends on $n_T$ which we have not fixed yet. To have minimum regret, we need to balance the first and second terms of the regret given in Theorem \ref{thm:main1}. Thus $T/n_T \approx N(T) n_T$. Since $n_T$ must be an integer, substituting (\ref{r1:eqn:cor1}) into $N(T)$, we have
\begin{align*}
n_T = \left\lfloor (f_{\max} L 2^{\alpha/2})^{\frac{2}{4+2 \alpha}} 
\left( \frac{T}{\log T} \right)^{\frac{1}{4 + 2 \alpha}} \right\rfloor.
\end{align*}
Proof is completed by substituting these into the result of Theorem \ref{thm:main1}.
\end{proof}

 \section{A Learning Algorithm with Fixed Number of Offers} \label{contract:sec:alg2}

One drawback of TLVO is that in exploration steps it simultaneously offers $n_T -1$ contracts, and this number increases sublinearly with $T$. Usually, the seller will offer different bundles of contracts but it will include same number of contracts in each bundle. For example, a wireless service provider usually adds new data plans by removing one of the current data plans, thus the total number of data plans offered does not change significantly over time. In this section, we are interested in the case when the seller is limited to offering $m$ contracts at every time step.

In this case, the exploration step of TLVO will not work. Because of this, we propose the algorithm {\em type learning with fixed number of offers} (TLFO) that always offers $m$ contracts simultaneously. TLFO differs from TLVO only in its exploration phase. Each exploration phase of TLFO lasts multiple time steps. Instead of simultaneously offering $n_T - 1$ uniformly spaced contracts at an exploration step, TLFO has an exploration phase of $\lceil (n_T-1)/(m-2) \rceil$ steps indexed by $l=1,2,\ldots, \lceil (n_T-1)/(m-2) \rceil$. The idea behind TLFO is to estimate the buyer's type distribution from the estimates of the segments of the buyer's type distribution over different time steps of the same exploration phase. 
Let time $t$ be the start of an exploration phase for TLFO. Let $l' = \lceil (n_T -1)/(m-2) \rceil$ denote the last step of the exploration phase. Next, we define the following bundles of $m$ contracts. The overlapping portions of these bundles are shown in Figure \ref{fig:allerton12} for $l=1,2,\ldots,l'$.
\begin{align*}
{\cal B}_1 &= \left\{\frac{1}{n_T}, \frac{2}{n_T}, \ldots, \frac{m}{n_T} \right\}, \\
\tilde{{\cal B}}_1 &= \left\{\frac{1}{n_T}, \frac{2}{n_T}, \ldots, \frac{m-1}{n_T} \right\}, \\
{\cal B}_{l'} &= \left\{ \frac{n_T - m}{n_T}, \frac{n_T - m + 1}{n_T}, \ldots, 
\frac{n_T - 1}{n_T} \right\}, \\
\tilde{{\cal B}}_{l'} &= \left\{ \frac{(l'-1)m - 2(l'-1)+ 2}{n_T}, \frac{(l'-1)m - 2(l'-1)+ 3}{n_T},  \ldots, \frac{n_T -1}{n_T} \right\},
\end{align*}
and for $l \in \{2,\ldots,l'-1\}$
\begin{align*}
{\cal B}_l &= \left\{ \frac{(l-1)m - 2(l-1)+ 1}{n_T}, \frac{(l-1)m - 2(l-1)+ 2}{n_T}, \ldots, 
\frac{lm - 2(l-1)}{n_T} \right\}, \\
\tilde{{\cal B}}_l &= \left\{ \frac{(l-1)m - 2(l-1)+ 2}{n_T}, \frac{(l-1)m - 2(l-1)+ 3}{n_T}, \ldots, \frac{lm - 2(l-1) -1}{n_T} \right\}.
\end{align*}
\begin{figure}
\begin{center}
\includegraphics[width=0.8\columnwidth]{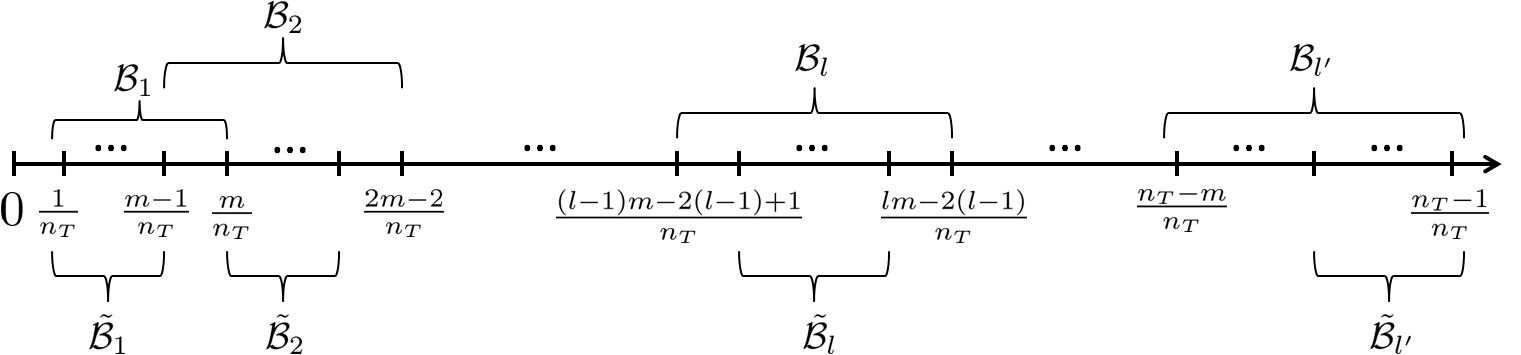}
\caption{bundles of $m$ contracts offered in exploration steps $l=1,2,\ldots, l'$ in an exploration phase} \label{fig:allerton12}
\end{center}
\end{figure}

Similar to TLVO let $N$ and $N_k$, $k \in \{1,2,\ldots, n_T-1 \}$ be the counters that are used to form type distribution estimates which are set to zero initially. Basically, at an exploitation \cm{phase} the estimates $\mu_k = N_k/N$ are formed based on the current values of $N_k$ and $N$. Different from the analysis of TLVO, $N(t)$ which is the value of counter $N$ at time $t$ represents the number of completed exploration phases by time $t$, not the number of exploration steps by time $t$.
The condition $N(t) < z(t)$ is checked at the end of each exploration phase or exploitation phase, and if the condition is true, a new exploration phase starts. In the first exploration step of the exploration phase, TLFO offers the bundle ${\cal B}_1$. If a contract $k/n_T \in \tilde{{\cal B}}_1$ is accepted, $N_k$ is incremented by one. In the $l$th exploration step, $l \in \{2,\ldots,l'-1\}$, it offers the bundle ${\cal B}_l$. If a contract $k/n_T \in \tilde{{\cal B}}_l$ is accepted, $N_k$ is incremented by one. In the last exploration step $l'$, it offers ${\cal B}_{l'}$. If a contract $k/n_T \in \tilde{{\cal B}}_{l'}$ is accepted, $N_k$ is incremented by one. At the time $t'$ when all the exploration steps in the exploration phase are completed, $N$ is incremented by one. Pseudocode of the exploration phase for TLFO is given in Figure \ref{fig:TLVOm}.

\begin{figure}[htb]
\fbox {
\begin{minipage}{0.9\columnwidth}
{Exploration phase of TLFO.}
\begin{algorithmic}[1]
\FOR{$l=1,2,\ldots, \lceil (n_T -1)/(m-2) \rceil$}
\STATE{Offer bundle ${\cal B}_l$.}
\STATE{Let $k/n_T \in {\cal B}_l$ be the accepted contract. Get reward $k/n_T$.}
\IF{$k/n_T \in \tilde{{\cal B}}_l$}
\STATE{$++N_k$}
\ENDIF
\STATE{$++t$}
\ENDFOR
\STATE{$++N$}
\end{algorithmic}
\end{minipage}
} \caption{pseudocode of the exploration phase of TLFO} \label{fig:TLVOm}
\end{figure}

Note that regret of the seller in this case is upper bounded by
\begin{align}
R(T) &\leq \sum_{\boldsymbol{x} \in S} (U_s(\boldsymbol{x}^*) - U_s(\boldsymbol{x})) E \left[ T_{\boldsymbol{x}}(T) \right] \notag \\
&+ \sum_{\boldsymbol{x} \in S^c} (U_s(\boldsymbol{x}^*) - U_s(\boldsymbol{x})) E \left[ T_{\boldsymbol{x}}(T) \right] \notag \\
&+ N(T) \lceil (n_T -1)/(m-2) \rceil (U(\boldsymbol{x}^*)). \label{r1:eqn:regretclam}
\end{align}
By the exploration phase of TLFO, the accuracy of the estimates $\mu_i(t)$ at the beginning of each exploitation \cm{phase} is the same as TLVO. Moreover, the the regret due to near-optimal exploitations can be upper bounded by the same term as in TLVO. Only the regret due to explorations changes. The number of exploration steps of TLFO is about $(n_T-1)/(m-2)$ times the number of exploration steps of TLVO, but there is no cost of offering more than $m$ (possibly a large number of) contracts in TLFO. 
The following theorem and corollary gives an upper bound on the regret of TLFO, by using an approach similar to the proofs of Theorem \ref{thm:main1} and Corollary \ref{corr:1}.
\begin{theorem}\label{thm:main2}
The regret of seller using TLFO with time horizon $T$ is upper bounded by
\begin{align}
R(T) &\leq 5 m f_{\max} L 2^{\alpha/2} n_T^{-\alpha} (T-N(T)) + N(T) \left( \frac{n_T -1}{m-2}+1 \right) U(\boldsymbol{x}^*) \notag \\
&+   2 n_T \sum_{t=1}^T e^{\frac{- f^2_{\max} L^2 2^{\alpha} N(t)}{n_T^{2+ 2\alpha}}}. \notag
\end{align}
\end{theorem}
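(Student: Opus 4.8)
The plan is to start from the regret decomposition (\ref{r1:eqn:regretclam}) and bound its three terms in turn, reusing as much of the proof of Theorem \ref{thm:main1} as possible, since TLFO differs from TLVO only in the exploration phase. The first sum, over near-optimal bundles $\boldsymbol{x} \in S$, is handled exactly as in (\ref{r1:eqn:part1}): I would bound $\max_{\boldsymbol{x} \in S} \delta^*_{\boldsymbol{x}} \leq \beta n_T^{-\alpha}$ by the definition of $S$, and $\sum_{\boldsymbol{x} \in S} E[T_{\boldsymbol{x}}(T)]$ by the total number of exploitation steps. Since each of the $N(T)$ completed exploration phases occupies at least one time step, the number of exploitation steps is at most $T - N(T)$, which yields the first term $5 m f_{\max} L 2^{\alpha/2} n_T^{-\alpha}(T - N(T))$, identical in form to TLVO.

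The key step is to argue that the second sum, over suboptimal bundles $\boldsymbol{x} \in S^c$, admits the same bound $2 n_T \sum_{t=1}^T e^{-f_{\max}^2 L^2 2^\alpha N(t)/n_T^{2+2\alpha}}$ as in TLVO. For this I would establish that after $N(t)$ completed exploration phases the estimates $\mu_i(t)$ have the same statistical accuracy as after $N(t)$ exploration steps of TLVO. The point is that the overlapping bundles ${\cal B}_l$ together with the counted sub-bundles $\tilde{{\cal B}}_l$ are constructed so that each contract $i \in \{1,\ldots,n_T-1\}$ appears as an interior (counted) contract of exactly one bundle per phase, and for such a contract the acceptance region inside ${\cal B}_l$ coincides with its acceptance region in the full offering ${\cal K}_T$, namely $(g(\tfrac{i-1}{n_T}, \tfrac{i}{n_T}), g(\tfrac{i}{n_T},\tfrac{i+1}{n_T})]$. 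Hence every completed phase contributes exactly one fresh independent sample to each $p_i$, so $\mu_i(t)=N_i(t)/N(t)$ is again an average of $N(t)$ i.i.d.\ Bernoulli draws with mean $p_i$. The remainder then carries over verbatim from Theorem \ref{thm:main1}: the decomposition and bound on $err_{\boldsymbol{x}}(x_i)$, the event $\xi_t$, the implication that $\xi_t$ forces $\hat{U}_t(\boldsymbol{x}) \leq \hat{U}_t(\boldsymbol{y}^*)$ for all $\boldsymbol{x} \in S^c$, and the union bound with Chernoff--Hoeffding, reproducing (\ref{r1:eqn:alt3}) unchanged.

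Finally, the third term of (\ref{r1:eqn:regretclam}) is the exploration regret. Because TLFO always offers exactly $m$ contracts, the per-step offering cost $c(m)$ cancels against the benchmark, so the worst-case loss per exploration step is at most $U(\boldsymbol{x}^*)$; with $\lceil (n_T-1)/(m-2) \rceil$ steps per phase and $N(T)$ completed phases this gives $N(T)\lceil (n_T-1)/(m-2)\rceil U(\boldsymbol{x}^*)$, which I would bound using $\lceil y \rceil \leq y+1$ to obtain the stated term $N(T)\left(\frac{n_T-1}{m-2}+1\right)U(\boldsymbol{x}^*)$. Summing the three bounds yields the theorem.

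I expect the main obstacle to be the second paragraph, namely rigorously verifying the estimate-accuracy equivalence: checking the boundary contracts $i=1$ and $i=n_T-1$ (whose regions legitimately reach $g(0,\cdot)$ and $1$ via the conventions $x_0=0$, $x_{m+1}=1$), and confirming that the indexing of the overlapping bundles ${\cal B}_l, \tilde{{\cal B}}_l$ covers every interior contract exactly once across the $\lceil (n_T-1)/(m-2)\rceil$ steps. Once this structural bookkeeping is in place, the statistical and algebraic parts reduce directly to the already-proved Theorem \ref{thm:main1}.
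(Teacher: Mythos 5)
Your proposal is correct and follows essentially the same route as the paper, which proves Theorem \ref{thm:main2} by reusing the decomposition (\ref{r1:eqn:regretclam}) and the argument of Theorem \ref{thm:main1}, with only the exploration term changing. In fact, your second paragraph supplies the key detail the paper only asserts in passing --- that the interior contracts $\tilde{{\cal B}}_l$ of the overlapping bundles have the same acceptance regions as in the full offering ${\cal K}_T$, so each completed phase yields one i.i.d.\ sample per $p_i$ and the Chernoff--Hoeffding step carries over with $N(t)$ reinterpreted as the number of completed phases.
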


Since TLFO simultaneously offers $m$ contracts both in explorations and exploitations, its regret does not depend on the cost function $c(.)$ of offering multiple contracts simultaneously. Therefore our sublinear regret bound always holds independent of $c(.)$.

\begin{corollary}\label{corr:2}
When the seller runs TLFO with time horizon $T$ and 
\begin{align*}
n_T &= \left\lfloor (f_{\max} L 2^{\alpha/2})^{\frac{2}{4+2 \alpha}} 
\left( \frac{T}{\log T} \right)^{\frac{1}{4 + 2 \alpha}} \right\rfloor, \\
z(t) &= \left( \frac{1}{f_{\max} L 2^{\alpha/2}}    \right)^{\frac{2+6\alpha}{2+\alpha}}  \left(  \frac{T}{\log T}   \right)^{\frac{2+ 2\alpha}{4+ 2\alpha}} \log t,
\end{align*}
we have
\begin{align*}
R(T) = C_{m} + m T^{(3+2\alpha)/(4+2\alpha)} (\log T)^{2/(4+2\alpha)},
\end{align*}
uniformly over $T$ for some constant $C_m > 0$. Hence,
\begin{align*}
R(T) = O(m T^{(3+2\alpha)/(4+2\alpha)} (\log T)^{2/(4+2\alpha)}).
\end{align*}

\end{corollary}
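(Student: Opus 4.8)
The plan is to mirror the proof of Corollary~\ref{corr:1}, because Theorem~\ref{thm:main2} already delivers the three-term regret bound and all that remains is to optimize the free parameters $n_T$ and $z(t)$. First I would pin down $z(t)$ exactly as in Corollary~\ref{corr:1}: demanding that the per-step factor obey $e^{-f^2_{\max}L^2 2^\alpha N(t)/n_T^{2+2\alpha}}\leq 1/t$ forces $N(t)\geq (n_T^{2+2\alpha}/(f^2_{\max}L^2 2^\alpha))\log t$, so I would set $z(t)=(n_T^{2+2\alpha}/(f^2_{\max}L^2 2^\alpha))\log t+1$; substituting the prescribed $n_T$ produces a control function of the displayed form. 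The one genuinely new bookkeeping point relative to TLVO is that here $N(t)$ counts completed exploration \emph{phases} rather than individual exploration steps; however, the exponential factor in Theorem~\ref{thm:main2} is itself expressed through this phase-count, so the bound $\sum_{t=1}^T e^{-f^2_{\max}L^2 2^\alpha N(t)/n_T^{2+2\alpha}}\leq \log T+1$ transfers verbatim and the third term is $O(n_T\log T)$.

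Next I would insert $n_T=\lfloor (f_{\max}L2^{\alpha/2})^{2/(4+2\alpha)}(T/\log T)^{1/(4+2\alpha)}\rfloor$ into the three terms of Theorem~\ref{thm:main2} and simplify each. The first term $5mf_{\max}L2^{\alpha/2}n_T^{-\alpha}(T-N(T))$ is treated identically to Corollary~\ref{corr:1} and contributes $O(m\,T^{(4+\alpha)/(4+2\alpha)}(\log T)^{\alpha/(4+2\alpha)})$. The only structurally different piece is the exploration term, which for TLFO is $N(T)(\tfrac{n_T-1}{m-2}+1)U(\boldsymbol{x}^*)$ in place of $N(T)(U(\boldsymbol{x}^*)+c(n_T)-c(m))$: since each exploration phase lasts $\lceil (n_T-1)/(m-2)\rceil\approx n_T/m$ steps, this term plays precisely the role of the cost term of Corollary~\ref{corr:1} with effective exponent $\gamma=1$. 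Using $N(T)\approx z(T)\approx T^{(2+2\alpha)/(4+2\alpha)}(\log T)^{2/(4+2\alpha)}$, it simplifies to $O(T^{(3+2\alpha)/(4+2\alpha)}(\log T)^{1/(4+2\alpha)})$, matching the $\gamma=1$ instance of Corollary~\ref{corr:1}.

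I would then collect the three simplified contributions exactly as the corollary states. The exploration term supplies the advertised $T$-power $(3+2\alpha)/(4+2\alpha)$; absorbing a common $(\log T)^{2/(4+2\alpha)}$ factor, the explicit $m$ from the first term, and all strictly lower-order pieces (including the additive $+1$'s in $z(t)$ and in the phase length, and the $O(n_T\log T)$ third term) into a $T$-independent constant $C_m$ yields $R(T)=C_m+m\,T^{(3+2\alpha)/(4+2\alpha)}(\log T)^{2/(4+2\alpha)}$, hence the stated $O$-rate. Because TLFO always offers exactly $m$ contracts in both phases, no cost function $c(\cdot)$ ever enters, which is why the bound holds unconditionally rather than only for $\gamma<2$ as in Corollary~\ref{corr:1}.

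The step I expect to require the most care is not any single estimate but the honest identification of the dominant term together with the phase bookkeeping. One must verify that the concentration of the estimates $\mu_i(t)$ at the start of each exploitation phase is the same as for TLVO after the same number $N(t)$ of phases (the assertion made in the text preceding Theorem~\ref{thm:main2}), and that the phase-versus-step distinction is applied consistently when relating $N(T)$, $z(T)$, and the elapsed horizon $T$. One should also check the comparison of $T$-exponents across the three terms: for the H\"older exponent $\alpha=1$ of the worked examples the first-term and exploration-term exponents both equal $5/6$, so the stated envelope with log-power $2/(4+2\alpha)$ dominates; for general $\alpha$ one would simply retain the explicit three-term bound as in Corollary~\ref{corr:1} before passing to the $O$-notation.
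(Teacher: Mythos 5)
Your proposal is correct and follows essentially the same route as the paper: the paper gives no separate proof of Corollary~\ref{corr:2}, stating only that it follows from Theorem~\ref{thm:main2} ``by an approach similar to'' Corollary~\ref{corr:1}, which is exactly what you do --- choose $z(t)$ to force the Chernoff term below $1/t$, substitute the prescribed $n_T$, and note that the exploration term $N(T)\left(\tfrac{n_T-1}{m-2}+1\right)U(\boldsymbol{x}^*)$ plays the role of the cost term with effective $\gamma=1$, independent of $c(\cdot)$. Your closing remark about which term actually dominates for $\alpha<1$ is a fair caveat that applies equally to the paper's own statement, and your handling of the phase-versus-step bookkeeping is consistent with Theorem~\ref{thm:main2}.
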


 \section{Discussion}\label{contract:sec:discuss}

A contract design problem for a secondary spectrum market is studied in \cite{shangpin2012contract}. In this work the authors assume that the type distribution $f(\theta)$ is known by the seller, and they characterize the optimal set of contracts. They show that when the channel condition is common to all types, i.e., probability that the channel is idle is the same for all types of users, a computationally efficient procedure exists for choosing the best bundle of $m$ contracts out of ${\cal L}_{m,T}$. This procedure can be used by the seller to efficiently solve (\ref{alg1:eqn:1}).


In the fixed number of offers case, we assume that at each time step the seller offers a bundle $(x_1, x_2, \ldots, x_m) \subset {\cal X}_m \subset [0,1]^m$. Therefore, the strategy set is a subset of the $m$-dimensional unit cube. Because of this relation, we can compare the performance of our contract learning algorithms with bandit algorithms for high dimensional strategy sets. For example, if the reward from any bundle $\boldsymbol{x}$ were of linear form, i.e., $U(\boldsymbol{x}) = \boldsymbol{C} \cdot \boldsymbol{x}$ for some $\boldsymbol{C} \in \mathbb{R}^m$, then the online stochastic linear optimization algorithm in \cite{dani2008stochastic} would give regret $O((m \log T)^{3/2} \sqrt{T})$. However, in our problem $U(\boldsymbol{x})$ is not a linear function, thus this approach will not work. One can also show that in general $U(\boldsymbol{x})$ is neither convex or nor concave, therefore any bandit algorithm exploiting these properties will not work in our setting.

Another work, \cite{bubeck:inria-00329797}, considers online linear optimization in a general topological space. For an $m$-dimensional strategy space, they prove a lower bound of $\tilde{O}(T^{(m+1)/(m+2)})$. Therefore, our bound is better than their lower bound for $m>2+2\alpha$. This is not a contradiction since in our problem it is the type $\theta$ that is drawn independently at each time step, not the rewards of the individual contracts, and we focus on estimating the expected rewards of arms (bundles of contracts) from the type distribution. In the same paper, a $\tilde{O}(\sqrt{T})$ regret upper bound is also proved, under the assumption that the mean reward function is locally equivalent to a bi-H\"{o}lder function near any maxima, i.e., $\exists c_1, c_2, \epsilon_0 > 0$ such that for $||\boldsymbol{x} - \boldsymbol{x'}|| \leq \epsilon_0$
\begin{align*}
c_1 ||\boldsymbol{x} - \boldsymbol{x}'||^{\alpha} \leq |U(\boldsymbol{x}) - U(\boldsymbol{x}')| \leq c_2 ||\boldsymbol{x} - \boldsymbol{x}'||^{\alpha}.
\end{align*}
However, in this paper, we only require a H\"{o}lder condition for the boundaries of the acceptance regions (see Assumption \ref{ass:contract:ordered}), which implies that
\begin{align*}
|U(\boldsymbol{x} ) - U(\boldsymbol{x}')| \leq c_3 ||\boldsymbol{x} - \boldsymbol{x}'||^{\alpha},
\end{align*}
for some $c_3 > 0$ and $\forall \boldsymbol{x}, \boldsymbol{x}' \in {\cal X}_m$.


\bibliographystyle{IEEE}
\bibliography{cem}


\end{document}